\documentclass[preprint,12pt]{elsarticle}




\usepackage{amssymb}
\usepackage{amsmath}

\usepackage{cleveref}
\usepackage{xcolor}
\usepackage{soul}
\usepackage{multirow}
\usepackage{url}
\usepackage{makecell}

\usepackage{amsthm}
\usepackage{comment}
\theoremstyle{definition}
\newtheorem{definition}{Definition}[section]
\usepackage{booktabs}
\usepackage{array}
\usepackage{float}
\newtheorem{theorem}{Theorem}

\journal{Information Systems}

\begin{document}

\begin{frontmatter}



\title{Automated decision-making for dynamic task assignment at scale}


\author[label1,label2]{Riccardo Lo Bianco\corref{cor1}}
\ead{r.lo.bianco@tue.nl}
\author[label1,label2]{Willem van Jaarsveld}
\author[label1]{Jeroen Middelhuis}
\author[label1,label2]{Luca Begnardi}
\author[label1,label2]{Remco Dijkman}

\affiliation[label1]{organization={Eindhoven University of Technology},
            addressline={5612 AZ Eindhoven},
            country={Netherlands}}

\affiliation[label2]{organization={Eindhoven Artificial Intelligence Systems Institute},
            addressline={5612 AZ Eindhoven},
            country={Netherlands}}

\cortext[cor1]{Corresponding author.}

\begin{abstract}
The Dynamic Task Assignment Problem (DTAP) concerns matching resources to tasks in real time while minimizing some objectives like resource costs or task cycle time. In this work, we consider a DTAP variant where every task is a case composed of a stochastic sequence of activities. The DTAP, in this case, involves the decision of which employee to assign to which activity to process requests as quickly as possible.
In recent years, Deep Reinforcement Learning (DRL) has emerged as a promising tool for tackling this DTAP variant, but most research is limited to solving small-scale, synthetic problems, neglecting the challenges posed by real-world use cases.
To bridge this gap, this work proposes a DRL-based Decision Support System (DSS) for real-world scale DTAPS.
To this end, we introduce a DRL agent with two novel elements: a graph structure for observations and actions that can effectively represent any DTAP and a reward function that is provably equivalent to the objective of minimizing the average cycle time of tasks. The combination of these two novelties allows the agent to learn effective and generalizable assignment policies for real-world scale DTAPs.
The proposed DSS is evaluated on five DTAP instances whose parameters are extracted from real-world logs through process mining. The experimental evaluation shows how the proposed DRL agent matches or outperforms the best baseline in all DTAP instances and generalizes on different time horizons and across instances.
\end{abstract}

\begin{keyword}
Task Assignment Problem \sep Deep Reinforcement Learning \sep Business Process Optimization \sep Graph Neural Network 


\end{keyword}

\end{frontmatter}



\section{Introduction}\label{sec:introduction}
The problem of matching resources to tasks effectively, generally referred to as the Task Assignment Problem (TAP)~\cite{Kuhn1955TheProblem}, has been studied from different angles in the DSS literature.
In the classic TAP, all tasks and resources are known at the initialization and can thus be tackled by mathematical programming.
In contrast, in the Dynamic Task Assignment Problem (DTAP), tasks and resources are only known at runtime. In this work, we examine a DTAP variant that is common in administrative and industrial processes where a single final decision or product can be obtained only by completing a sequence of activities, namely, a case. 

As an example, let us consider the case of a financial institution that receives loan applications (i.e., cases) and has to decide whether to approve or reject them. Multiple activities must be performed to produce an approval decision. For example, the possible flows of activities that lead to a decision for a single case are represented in the BPMN notation in~\cref{fig:bpmn}, in which the arrows from the left \textit{Start} activity circle to the right \textit{End} activity circle show the sequence of activities and the \textit{X} gateways represent probabilistic (exclusive) choices on the next activity to perform. At any point in time, a case is characterized by a single current activity. Employees (i.e. resources) in the company must be assigned to work on the current activity of a case by a planner. Only a subset of all available employees is active at any time to be assigned to cases according to a calendar. Once assigned, after a certain amount of time, which is based on the resource proficiency on the case's current activity, the assignment terminates, the resource is again considered active, and the case either moves to a new current activity or terminates. In this work, we consider the objective of minimizing the average cycle time of cases.

\begin{figure}[H]
    \centering
    \centerline{\includegraphics[width=\textwidth]{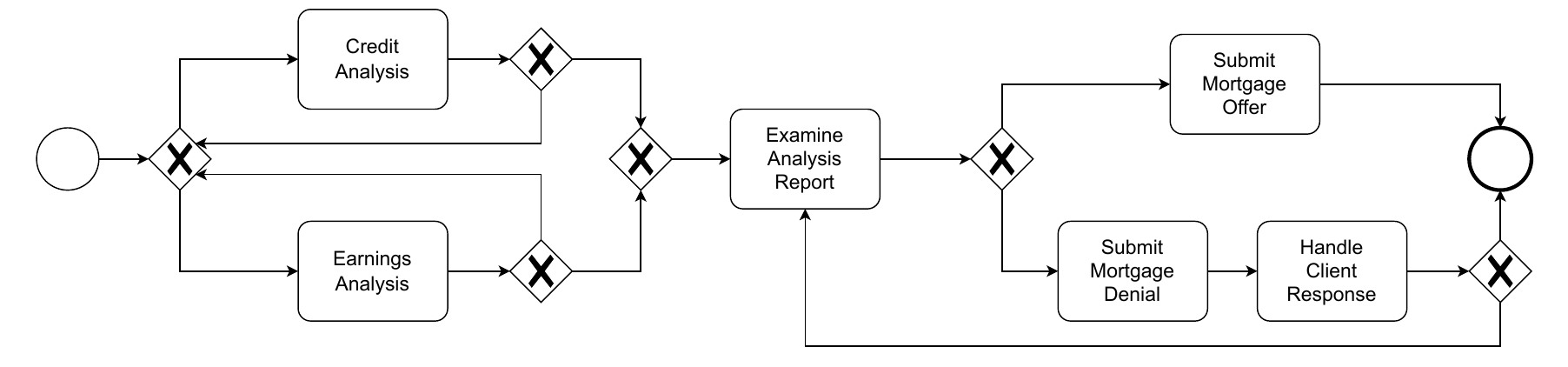}}
    \caption{A BPMN representation of a loan application process.}
    \label{fig:bpmn}
\end{figure}

In recent years, researchers have provided promising results by applying Deep Reinforcement Learning (DRL) to the DTAP~\cite{HUANG2011127, 10.1007/978-3-031-41620-0_13}, but only a few works deal with the DTAP variant considered here~\cite{scenarios, 10.1007/978-3-031-70396-6_10}. Crucially, most of these works show only proof-of-concept applications on small-scale artificial problems, and large-scale applications are not yet fully attained. 

This work aims to fill this gap by proposing a Decision Support System (DSS) that helps planners assign resources to tasks in large-scale DTAPs under realistic assumptions. To achieve this goal, according to the DRL paradigm, we propose a system architecture built on two modules: an \textit{environment} and an \textit{agent}. The former models the characteristics and dynamics of a specific DTAP instance and is parametrized through statistical analysis of real-world event logs. The latter models the decision-maker who interacts with the environment by choosing which assignments to perform. We propose a DRL agent that learns effective assignment policies thanks to two novel elements, namely, a graph-based observation inspired by~\cite{10.1007/978-3-031-70418-5_12}, which enables learning generalizable policies for the DTAP using the Proximal Policy Optimization (PPO) algorithm paired with Graph Neural Networks (GNNs), and a reward function that allows optimizing objectives that can be evaluated only as the end result of cases, such as the total cost of cases or their average cycle time. In particular, we provide the formal proof that such a reward function translates directly to the cycle time of cases, thus ensuring that the maximization of the rewards corresponds to the problem objective. These two novel elements represent the main contribution of this paper.

To validate the proposed method, we train the DRL agent in five environments, parametrized by applying process mining to real-world event logs. The results show that the agent not only matches or exceeds the performances of the best baseline heuristic but also displays remarkable generalization capabilities on time horizons longer than the training one and across different datasets.

Against this background,~\cref{sec:related_work} is dedicated to a review of related research work. In~\cref{sec:problem}, we present the DTAP variant considered in this work by describing the environment module. In ~\cref{sec:method}, we introduce the agent module, focusing on observations, GNN structures, and rewards. In~\cref{sec:experimental_setting}, we describe the logs and the data mining technique used to parameterize the environment, as well as the performance of the agent in various settings. Finally,~\cref{sec:final} discusses the merits and limitations of the proposed approach and defines the next research steps.
\section{Related Work}\label{sec:related_work}

This section is dedicated to an analysis of existing work related to the problem of efficiently matching resources to tasks in the context of DSSs, as well as studies from other fields that have a direct link to the methods discussed in this paper. 

In~\cref{tab:literature_review_methods}, we present an overview of existing studies pertaining to DSSs for Task Assignment Problems (TAP). For each work, we report the TAP problem discussed (static or dynamic), whether proficiency levels (PL) are modeled, whether resource calendars (RC) are modeled, whether tasks are composed of sequences of activities (AS), and an indication of the size of validation data.

\begin{table}[htbp] \centering \begin{tabular}{>{\centering\arraybackslash}p{2.5cm} >{\centering\arraybackslash}p{2cm} >{\centering\arraybackslash}p{1.5cm} >{\centering\arraybackslash}p{1.5cm} >{\centering\arraybackslash}p{1.5cm} >{\centering\arraybackslash}p{2cm}} \toprule \textbf{Study} & \textbf{Type} & \textbf{PL} & \textbf{RC} & \textbf{AS} & \textbf{Data Size} \\ \midrule \cite{10.1007/978-3-642-31075-1_21} & DTAP & No & No & No & Small \\ \cite{Jana2022} & TAP & No & No & No & Small \\ \cite{WU2018107} & DTAP & No & No & No & Large \\ \cite{Wu2023} & DTAP & No & No & No & Large \\ \cite{Gamst2024} & TAP & Yes & No & No & Large \\ \cite{7604008} & TAP & Yes & No & No & Large \\ \cite{Vanderschueren2024} & DTAP & Yes & No & No & Large \\ This paper & DTAP & Yes & Yes & Yes & Large \\ \bottomrule \end{tabular} \caption{Literature relevant to DSSs for resource allocation} \label{tab:literature_review_methods} \end{table}

In \cite{10.1007/978-3-642-31075-1_21}, the authors propose a DSS that employs queueing theory to optimize resource allocation for earthquake search and rescue operations, simulating earthquake scenarios to determine the best assignment of rescue groups to operational areas. This work is close to ours because it solves a DTAP but relies on assumptions that make the problem approachable with queuing theory.
In \cite{Jana2022}, the authors introduce a DSS that hierarchically allocates resources and tasks for disaster management. The proposed method is not validated on real-world data, and its applicability to large-scale instances is not discussed.
\cite{WU2018107} proposes a framework for a time-prediction-based task assignment approach in spatial crowdsourcing, combining supervised learning predictions with assignment heuristics. The proposed method is specific to spatial crowdsourcing and cannot be applied to other DTAP variants.
Similarly, \cite{Wu2023} refines the concepts presented in \cite{WU2018107}, but the proposed methodology is not applicable to other DTAP variants.
\cite{Gamst2024} presents a DSS for the technician routing and scheduling problem, a TAP variant, optimizing technician routes and schedules based on qualifications and time constraints. It uses a metaheuristic approach with column generation to build optimal solutions. However, the proposed method is not applicable in the dynamic setting.
In \cite{7604008}, the authors propose a heuristic-based DSS for an IT service manager to assign tasks to the most suitable IT analysts. However, the proposed method is limited to the static TAP, and no discussion regarding its application to the dynamic setting is provided.
In \cite{Vanderschueren2024}, the authors propose a method to optimally allocate limited resources to uncertain tasks by framing the DTAP as a linear assignment problem and using learning-to-rank to maximize expected profit. The model assumes stochastic worker capacity and uncertain task outcomes, similar to ours, but it does not include resource calendars, and it does not consider cases composed of sequences of activities but rather atomic tasks.

As emerges from~\cref{tab:literature_review_methods}, the works in the DSS literature have, until now, neglected problems with multiple activities per task and resource calendar.
However, in works pertaining to other domains, we can find some early attempts in this direction. In~\cite{Huang2011}, the authors model a DTAP with cases as a Markov Decision Process (MDP) to solve it through reinforcement learning. Although the evaluation is rigorous and based on real-world logs, the proposed approach is limited to small problem instances due to the use of vanilla reinforcement learning techniques. In this work, we remove this limitation through Deep Reinforcement Learning (DRL), which combines deep learning with reinforcement learning.

The use of DRL to learn assignment policies for DTAPs with tasks composed of different activities has been pioneered in~\cite{scenarios} on small-scale archetypal problems. A single work~\cite{10.1007/978-3-031-70396-6_10} tackled a large-scale DTAP with proficiency levels, resource calendars, and activity sequences. The main limitation of~\cite{10.1007/978-3-031-70396-6_10} lies in the fact that the environment used for the evaluation is parameterized by mining the processing and waiting time of the activities as a single parameter, leading to overly optimistic resource calendars. In fact, early experiments showed that the DRL approach proposed in~\cite{10.1007/978-3-031-70396-6_10} cannot learn effective policies for the DTAPs considered in this work. Compared to~\cite{10.1007/978-3-031-70396-6_10}, this paper proposes a reward function that translates directly to the problem objective and a DRL method capable of generalization across different DTAP instances.

\section{Problem Description}\label{sec:problem}
This section is dedicated to the description of the DTAP variant considered in this work (called, from now on, simply DTAP). To this end, we provide the formal definition of the problem, the problem state, and the transitions that trigger the changes of state.

In the DTAP, resources are assigned to cases that enter the system dynamically. The cases are composed of a sequence of activities. At any point, a single activity (the \textit{current activity}) characterizes a case. The sequence of activities that must be performed to complete the case is unknown when the case enters the system. 
Instead, each activity is associated with the likelihood of the next activity that must be performed or the completion of the case (given by the occurrence of an \textit{End} activity) by means of a probability function. 
The duration of each assignment (i.e., the time it takes for a given resource to complete a given activity) is sampled from a random variable. Moreover, the available resources are known, but only a subset of them are active at any point in time, according to a probabilistic calendar.

To provide a formal definition of the DTAP, we dedicate~\cref{subsec:problem_instance} to the description of the \textit{problem instance}, which is the set of immutable parameters characterizing the problem, and~\cref{subsec:problem_state} to the description of the \textit{problem state}, which is the set of mutable parameters whose values change during execution. In~\cref{subsec:transitions}, we define the behavior in terms of the \textit{transitions} that trigger changes in the problem state.

\subsection{Problem instance}\label{subsec:problem_instance}

A DTAP instance is described by the set of constants in~\cref{def:constants}.

\begin{sloppypar}
\begin{definition}[Dynamic Task Assignment Problem]\label{def:constants}
A DTAP is a tuple \((M, L, f_{\text{comp}}, G, f_{\text{trans}}, P, V, \lambda , \tau_{\text{max}})\), where:

\begin{itemize}
\item \(M\) is the set of resources available in the system.
\item \(L\) is a the of activity labels. At any time, a case \(c\) is characterized by a single activity label \(l \in L\). The set \(l\) contains at least a \textit{Start} and an \textit{End} activity, and every new case entering the system is identified by the \textit{Start} activity.
\item \(f_{\text{comp}}: G \to \mathbb{R}^{+} \times \mathbb{R}^{+} \) is a function that associates every tuple \((l, m) \in G\) with a mean \(\mu_{(l, m)}\) and a variance \(\sigma_{(l,m)}\) for the completion time. Every time a resource \(m\) is assigned to an activity label \(l\), the completion time is sampled from a Gaussian random variable with mean \(\mu_{(l, m)}\) and standard deviation \(\sigma_{(l,m)}\). To avoid negative completion times, the absolute value is considered.
\item \(G\) is a set of tuples of the form \( (l, m)\), indicating which resources \(m \in M\) can be assigned to cases whose current activity is \(l \in L\). We refer to the set of tuples that contain a given activity label \(l\) as the \textit{resource pool} of \(l\).
\item \(f_{\text{trans}}: L \to Q^{|L|}\) is a function that associates each activity label \(l\) with a probability for every activity label in \(L\). Every time an activity \(l\) is completed, a new activity label is produced for the current case by sampling from \(f_{\text{trans}}(l)\). At least one activity \(l\) must exist with a non-0 probability to produce an \textit{End} next activity.
\item \(P \in \mathbb{N}^{K}\) is a vector of \(K\) natural numbers representing the hourly resource calendar. For example, we can consider an hourly calendar with a weekly periodicity setting \(K=168\). For each hour of the week, \(P\) reports the expected number of active resources.
\item \(V \in \mathbb{N}^{|M|}\) is a vector of \(|M|\) natural numbers specifying a weight for every resource in \(M\), such that for the resource at position $i$ in the vector, the probability that this resource works compared to the probability that other resources work is $V_i$ divided by $\sum_{i=1}^{|V|} V_i$.  The weights are used to sample the resources that enter or exit the system every hour according to the values in \(C\).
\item \(\lambda \in \mathbb{R}^{+}\) is the arrival rate of cases.
\item \(\tau_{\text{max}} \in \mathbb{R}^{+}\) is the time horizon of the problem, expressed in hours.

\end{itemize}
\end{definition}
\end{sloppypar}

\subsection{Problem State}\label{subsec:problem_state}

The state of a problem instance at (simulated) time \(\tau\) is defined as follows.

\begin{sloppypar}
\begin{definition}[DTAP State]\label{def:state}
At a given point in time \(\tau\), a DTAP \((M, L, f_{\text{comp}}, G, f_{\text{trans}}, P, V, \lambda , \tau_{\text{max}})\) is in state \(s_{\tau} = (M^{a}_{\tau}, M^{b}_{\tau}, C^{a}_{\tau}, C^{b}_{\tau}, B_{\tau}, t)\), where:

\begin{itemize}

\item \(M^{a}_{\tau} \subseteq M\) is a set of active (i.e. available for assignment) resources.
\item \(M^{b}_{\tau} \subseteq M\) is a set of busy resources (that is, working on a case).
\item \(C^{a}_{\tau}\) is a multiset of active cases, each characterized by its current activity label \(l \in L\).
\item \(C^{b}_{\tau}\) is a multiset of busy cases, each characterized by its current activity label \(l \in L\).

\item \(B_{\tau}\) is a set of assignments, represented as tuples \( (m, c, \tau_{\text{comp}})\), indicating which resource \(m \in R^{b}_{\tau}\) is working on which case \(c \in C^{b}_{\tau}\), and the (unobservable) completion time of the assignment \(\tau\).
\item \(t\) is the current decision step, measured as the number of individual assignments of a resource to a case made until \(\tau\).
\end{itemize}

\end{definition}
\end{sloppypar}


\subsection{Problem Behavior}\label{subsec:transitions}

The behavior of each problem instance is defined as transitions from one problem state to the next, triggered by the occurrence of transitions. In this work, we consider the following transitions.

\begin{sloppypar}
\begin{definition}[DTAP Transitions]\label{def:events}
The transitions that produce changes in the DTAP state are:

\begin{enumerate}
\item \textbf{Case Arrival}: new case arrivals happen according to the interarrival rate \(\lambda\). The new cases are elements of \(C_{\tau}^{a}\) characterized by the \textit{Start} activity label. A \textit{Case Arrival} transition that produces a new case \(c\) changes the current state \(s_{\tau}\) whose set of active cases is \(C^{a}_{\tau}\) to a new state \(s_{\tau'}\) whose set of active cases is \(C^{a}_{\tau '} = C^{a}_{\tau} \cup \{ c \}\).

\textit{Plan Activity} transition occurs if there are possible assignments, according to the resource pools in \(G\). When the transition happens, the agent is called to select a tuple \( (m, l)\) from the possible assignments, corresponding to the tuple \( (m, c) \) where \(c\) is a case in the system whose current activity is \(l\). This changes the current state \(s_{\tau}\) to a new state \(s_{\tau'}\) where \(M_{\tau'}^{a} = M_{\tau}^{a} \setminus \{ m \}\), \(M_{\tau'}^{b} = M_{\tau}^{b} \cup \{ m \}\), \(C_{\tau'}^{a} = C_{\tau}^{a} \setminus \{ c \}\), and \(C_{\tau'}^{b} = C_{\tau}^{b} \cup \{ c \}\).

\item \textbf{Start Activity}: After the agent selects an available assignment during a \textit{Plan Activity} transition, the completion time for the assignment is sampled from \(f_{\text{comp}}\), generating a new element in \(B_{\tau}\). This changes the current state \(s_{\tau}\) to a new state \(s_{\tau'}\) where \(B_{\tau'} = B_{\tau} \cup \{ (m, c, \tau_{\text{comp}}) \}\). Note that, in this case, the simulated time does not progress, i.e., \(\tau = \tau '\).
    
\item \textbf{Complete Activity}: After the completion time of an element of \(B_{\tau}\) has elapsed, the resource is removed from \(M_{\tau}^{b}\) and either placed in \(M_{\tau}^{a}\) or removed from the system, according to the schedule \(P\). A new current activity \(l\) is sampled from \(f_{\text{trans}}\), and the case is either put into \(C_{\tau}^{a}\) or removed from the system if the \textit{End} activity is reached. This changes the current state \(s_{\tau}\) to a new state \(s_{\tau'}\) where \(M_{\tau'}^{b} = M_{\tau}^{b} \setminus \{ m \}\), \(M_{\tau'}^{a} = M_{\tau}^{a} \cup \{ m \}\) (if the resource is not removed), and \(C_{\tau'}^{b} = C_{\tau}^{b} \setminus \{ c \}\), with \(C_{\tau'}^{a} = C_{\tau}^{a} \cup \{ c \}\) if the case is not completed.

\item \textbf{Schedule Resources}: Every chosen time unit (in our case, every hour), the available resources in the simulation are re-evaluated based on \(P\) and \(V\). If too many resources are active, some resources in \(M_{\tau}^{a}\) are sampled to be removed (if too few resources are available in \(M_{\tau}^{a}\), resources in \(M_{\tau}^{b}\) are removed once they complete their assignment), while if too few are present, some resources in \(M \setminus (M_{\tau}^{a} \cup M_{\tau}^{b})\) are sampled and made active to reach the expected amount. This changes the current state \(s_{\tau}\) to a new state \(s_{\tau'}\) where \(M_{\tau'}^{a}\) is adjusted according to the sampling.

\item \textbf{Complete Case}: According to the probabilities that guide the \textit{Complete Activity} transition, if the next activity in the case is an end activity, the corresponding case is removed from \(C_{\tau}^{b}\). This changes the current state \(s_{\tau}\) to a new state \(s_{\tau'}\) where \(C_{\tau'}^{b} = C_{\tau}^{b} \setminus \{ c \}\).
\end{enumerate}
\end{definition}
\end{sloppypar}

\section{A DSS for Solving DTAP at Scale}\label{sec:method}

In this section, we describe a DRL agent that is suitable for solving DTAP problems at scale. In particular, we introduce two novel elements: (1) a graph-based structure for observations and actions that enables the representation of large-scale DTAPs and can easily be mapped to an input for a suitable Graph Neural Network (GNN); and (2) a reward function that allows the agent to learn effective policies to optimize objectives specified in terms of the end result of cases, focusing in particular on the minimization of average cycle time of cases.
A simple component model of the proposed DSS is presented in~\cref{fig:component_model}, distinguishing the environment and the agent. The \textit{DTAP} component represents a DTAP instance as introduced in~\cref{def:constants} and executes the transitions as in~\cref{def:events}. When assignments are possible, the \textit{DTAP} component presents the current state, as introduced in~\cref{def:state}, to the \textit{Observation Mapping} and the \textit{Reward Mapping} components. The former produces a representation of the state (i.e., observation) that is suitable as an input for the agent, while the latter produces a reward signal for the previous assignment. Based on the received observation, the \textit{Agent} module produces a numeric value, the action, that is passed to the \textit{Action Mapping}. This last module translates the received action to the corresponding assignment and passes it to the DTAP. 

\begin{figure}[htbp]
    \centering
    \centerline{\includegraphics[width=\textwidth]{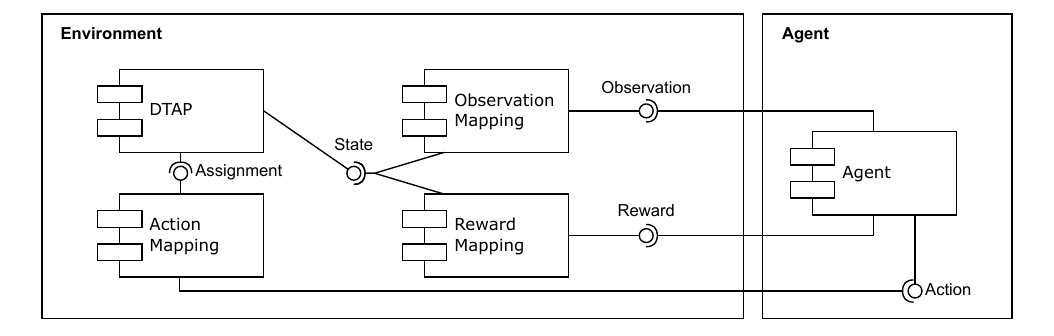}}
    \caption{A component model of the proposed DSS.}
    \label{fig:component_model}
\end{figure}

Against this background, we dedicate~\cref{subsec:DTAP_MDP} to the mapping of elements of~\cref{fig:component_model} to the MDP underlying the DRL agent. In~\cref{subsec:obs_act} we provide the description of the graph-based encoding of observations produced by the \textit{Observation Mapping} module and the graph-based encoding of the actions produced by the \textit{Agent} module. In~\cref{subsec:obs_act}, we describe the GNNs that are used to produce the actions given the observations. Finally,~\cref{subsec:rewards} is dedicated to the description of the~\textit{Reward Mapping} module.

\subsection{DTAP as a Markov Decision Process}\label{subsec:DTAP_MDP}

Online DRL algorithms learn by interaction with an environment. Such an environment is defined mathematically as a Markov Decision Process (MDP). The proposed solution method is based on elements of the MDP definition, reported in~\cref{def:mdp} as an adaptation of~\cite{Sutton1998}.

\begin{sloppypar}
\begin{definition}[Markov Decision Process]\label{def:mdp}
A Markov Decision Process (MDP) is defined by the tuple \( (O, A, \mathcal{P}, R, \gamma) \), where:

\begin{itemize}
    \item \( O \) is a set of observations.
    \item \( A \) is a set of actions.
    \item \( \mathcal{P}(o'|o, a) \) is the state transition probability, representing the probability of transitioning to observation \( o' \) from observation \( o \) after taking action \( a \).
    \item \( R(o, a) \) is the reward function, representing the immediate reward received after taking action \( a \) in observation \( o \).
    \item \( \gamma \) is the discount factor, where \( 0 \leq \gamma \leq 1 \), representing the importance of future rewards.
\end{itemize}
\end{definition}
\end{sloppypar}

In~\cref{def:mdp}, the classic MDP definition is slightly modified to avoid confusion: in particular, we refer to MDP observations instead of states, since in this work, the term \textit{state} is reserved for the problem states described in~\cref{def:state}. It is important to underline the fact that, in the MDP formulation, the time \(\tau\) is not considered. Instead, the number of actions \(t\) taken until a given observation, commonly referred to as \textit{decision step}, is tracked. For this reason, \(t\) is part of the state in~\cref{def:state}.

In~\cref{fig:rl_cycle_extended}, we provide a graphical representation of the proposed DSS by enriching the classic reinforcement learning cycle~\cite{Sutton1998}. The DTAP produces new states as a result of transitions. When assignments are possible in a state \(s_{\tau}\), an observation \(o_t\) is presented to the solving algorithm, where \(t\) is the decision step. The structure of \(o_t\), which is detailed in the next subsection, is a representation of the current state \(s_{\tau}\). The agent decides on a single assignment that is applied to the environment, producing a new state \(s_{\tau'}\). If other assignments are available, the simulation time is not modified, and a new observation \(o_{t+1}\) is immediately produced. If, instead, no other assignments are possible, the environment continues to produce new states until new assignments are possible.

\begin{figure}[htbp]
    \centering
    \centerline{\includegraphics[width=\textwidth]{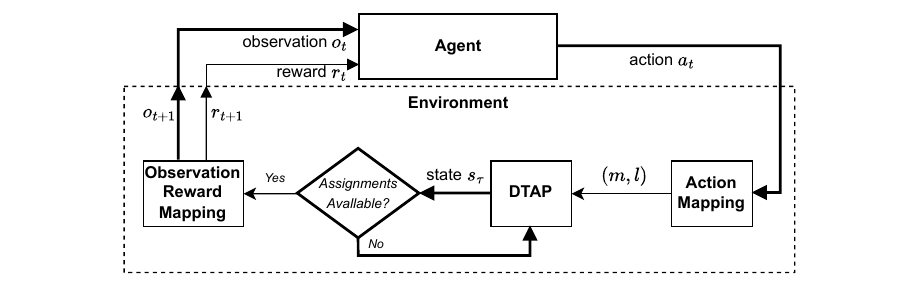}}
    \caption{MDP interaction between agent and environment for DTAP.}
    \label{fig:rl_cycle_extended}
\end{figure}

Against this background, we dedicate one subsection to each of the three main features of the proposed DRL agent: a graph-like observation, an architecture for the value and policy networks that can work seamlessly with such observations, and a specialized reward structure that mirrors the problem objective.

\subsection{Observations and Actions}\label{subsec:obs_act}

Assignment problems are commonly expressed as bipartite graph matching problems. In bipartite graph matching problems, a heterogeneous graph with two types of nodes represents the objects to be put into relations. In the case of the DTAP, we shall consider resources on one end and activity labels on the other.
An example of such a representation is given in~\cref{fig:bipartite_graph}.

\begin{figure}[htbp]
    \centering
    \centerline{\includegraphics[width=\textwidth]{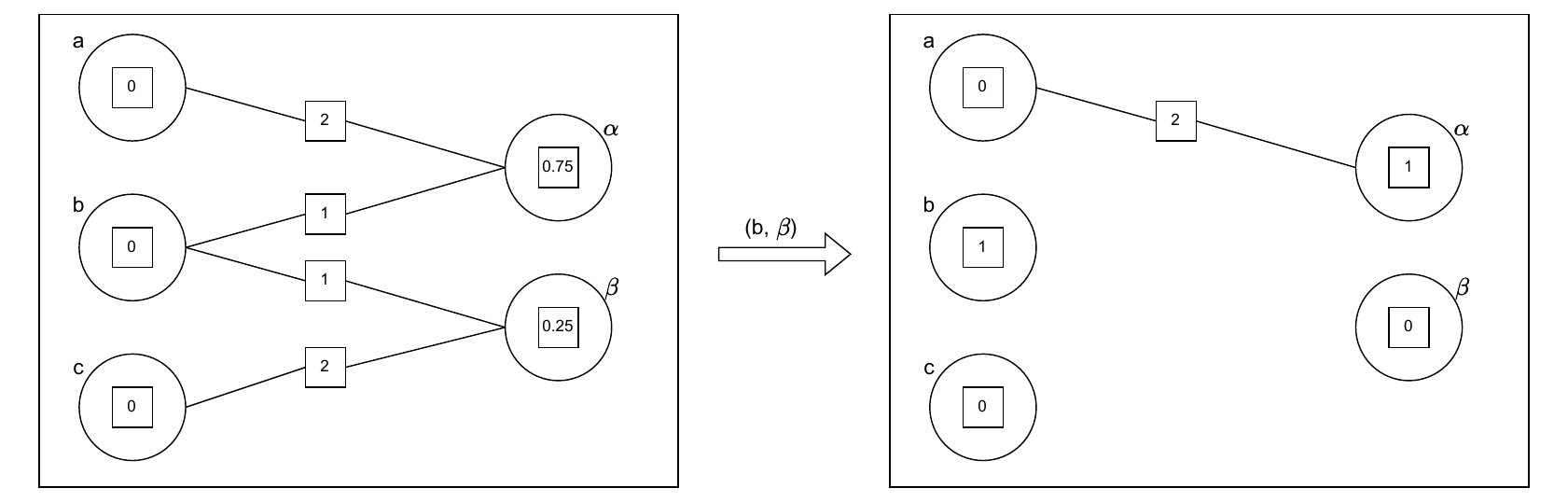}}
    \caption{A visual representation of the DTAP observation as a bipartite graph.}
    \label{fig:bipartite_graph}
\end{figure}

In~\cref{fig:bipartite_graph}, we represent an observation (left) where three resources are present, \(a\), \(b\), and \(c\), and two activity labels \(\alpha\) and \(\beta\). Resource nodes are characterized by a Boolean value that indicates whether they are active or busy (used only for masking purposes). Activity nodes are characterized by a single feature that represents the percentage of active cases whose current activity label is the one represented by the node. We do not encode the number of cases for each current activity label, but rather their ratio, in order to enforce a degree of symmetry in the representation, thus shrinking the size of the observation space. Active resources that fall into the resource pool of a given activity will be connected to the corresponding activity node via an (undirected) edge. The edge features encode the average completion time of each activity label for each resource in its resource pool. Only resource \(b\) falls into the resource pools of both activities, with an average completion time of \(1\) for both, while \(a\) can only be assigned to \(\alpha\) with an average completion time of \(2\), and \(c\) can only be assigned to \(\beta\) with an average completion time of \(2\). Based on the figure, we know that the ratio between cases whose current activity label is \(\alpha\) and cases whose current activity is \(\beta\) is \(3/4\).
Given this observation, an assignment would be performed by selecting an edge, represented as the concatenation of the resource and activity nodes that it connects. In~\cref{fig:bipartite_graph}, the agent chooses to assign resource \(b\) to (the only case having) activity label \(\beta\), leading to a new observation (right) where only cases characterized by activity label \(\alpha\) are present. When multiple instances of a given activity are present (i.e., multiple elements in \(C^{a}_{\tau}\) have the same value), we assume that a FIFO policy is adopted, where cases that entered the system first are also served first.

The proposed logic gives the foundation for creating observations that can be used to feed a GNN. However, classic GNN architectures that support heterogeneous graphs, such as the Heterogeneous graph Attention Network (HAN)~\cite{10.1145/3308558.3313562} used in this work, are incapable of effectively performing actions in the form of edge selections.
This is because, in HAN, the activity label nodes are updated with all the features of connected resources, and all the resources would be updated with the features of all connected activity label nodes, regardless of the edge features. This would make all edges connected to the nodes with the same features equal, making the representation unusable for decision-making.
To avoid this issue, we modify the bipartite graph representation by introducing a third node type, the \textit{assignment node}. Similarly to~\cite{10.1007/978-3-031-70418-5_12}, this approach casts the DTAP as a node selection problem. Assignment nodes exhibit a directed edge incoming from a single resource node, one incoming from a single activity label node, and a self-connecting edge. They are characterized by a single feature that represents the average completion time of the assignment. The directed edges ensure that the assignment nodes' embeddings are calculated without updating information in the resource and activity label nodes, and the self-connecting edges ensure that the average completion time is taken into account in the assignment node embedding. This version of the observation, which we call the \textit{assignment graph}, is reported in~\cref{fig:assignment_graph}.

\begin{figure}[htbp]
    \centering
    \centerline{\includegraphics[width=\textwidth]{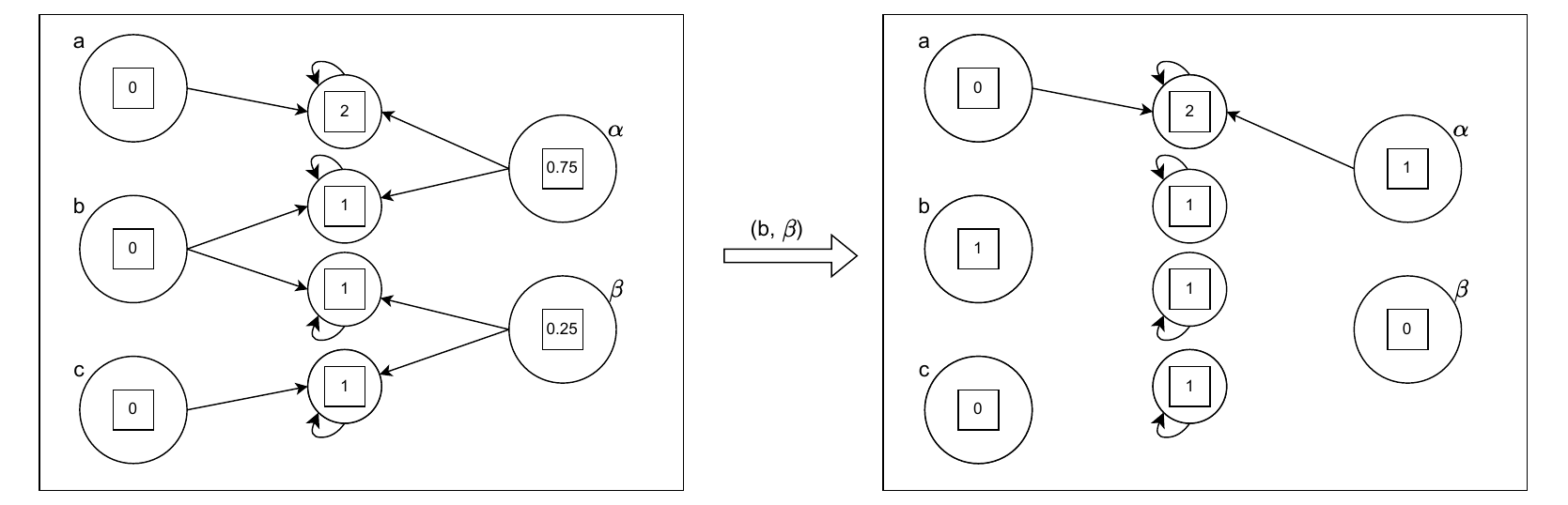}}
    \caption{A visual representation of the DTAP observation as an assignment graph.}
    \label{fig:assignment_graph}
\end{figure}

In this representation, actions are encoded as the selection of assignment nodes, and unfeasible assignments are represented as isolated nodes (and excluded from the available actions using a mask applied to the output of the policy network).

In~\cref{def:observation}, we formally define the assignment graph observation. We resort to the notation \(f(x)\) to indicate the features of element \(x\).

\begin{sloppypar}
\begin{definition}[DTAP Observation]\label{def:observation}
Given a DTAP\\ \((M, L, f_{\text{comp}}, G, f_{\text{trans}}, P, V, \lambda , \tau_{\text{max}})\) in state \(s_{\tau} = (M^{a}_{\tau}, M^{b}_{\tau}, C^{a}_{\tau}, C^{b}_{\tau}, B_{\tau}, t)\) the observation at decision step $t$ is defined as the heterogeneous graph \(o_t = (H_t, Q_t, J, Z, D_t)\) , where:

\begin{itemize}
\item \(H_t\) is the set of all resource nodes, i.e. it is created s.t. there is a one-to-one mapping $H_t\rightarrow M$.  For convenience, we refer to the resource node \(h\) associated with resource \(m\) with the notation \(h_m\). Nodes in \(H_t\) are characterized by a single boolean feature s.t. \(\forall m \in M \colon f(h_m) = 0 \text{ if } m \in M^{a}_{\tau} \text{ else } f(h_m) = 1\).
\item \(Q_t\) is the set of activity (label) nodes,  i.e. it is created s.t. there is a one-to-one mapping $Q_t\rightarrow L$. Such nodes are characterized by a single feature representing, for each element \(l \in L\), the ratio of cases \(c \in C^{a}_{\tau} | f(c) = l\).  For convenience, we refer to the activity label node \(q\) associated with activity label \(l\) with the notation \(q_l\). Since the features of the nodes in \(Q_t\) represent a ratio, it is always the case that \(\sum_{q \in Q_t}{f(q)} = 1\).
\item \(J\) is a fixed set of assignment nodes,  i.e. it is created s.t. there is a one-to-one mapping \(J\rightarrow G\). For convenience, we refer to the assignment node $j \in J$ corresponding to \((l,m) \in G\) with the notation \(j_{(l,m)}\). Nodes in \(J\) are characterized by a single feature representing the average completion time \(\mu_{(l, m)}\) of the connected resource \(m\) when assigned to an activity of the connected activity node \(l\), i.e. \(\forall (l, m) \in G, j_{(l,m)} \in J \text{ such that } f(j_{(l,m)}) = \mu_{(l, m)}\).

\item \(Z = \{(j,j)|j \in J\}\) is a fixed set of self-connecting edges on each assignment node.

\item \(D_t\) is a set of edges \((h, j) \) from (available) resource nodes to assignment nodes and edges \((q, j)\) from (available) activity label nodes to assignment nodes. Edges incoming to an assignment node \(j_{(l, m)}\) exist if the corresponding resource \(m\) is available for assignment, i.e. if \(f(h_r) = 0\), and if there is at least one case whose current activity label is \(l\), i.e. \(\forall j_{(l, m)} \in J \colon \exists (h_m, j_{(l, m)}) \in D_t \land \exists (q_l, j_{(l, m)} \in D_t) \text{ iff } f(h_m) = 0 \land f(q_l) > 0\).

\end{itemize}
\end{definition}
\end{sloppypar}

To speed up the training process, observations are passed to the agent only when there are at least two possible assignments, and completion times during training are forced to be the means \(\mu _{(l,m)}\) of \(f_{\text{comp}}\). In addition, the features of nodes of the same type are standardized by subtracting their mean and dividing by the standard deviation to ensure training stability.

\subsection{Actor and Critic Neural Networks}
Having described the characteristics of the observations, we can define the components of the neural networks at the core of the DRL agent.

In DRL applications, the agent follows a policy \(\pi\), which is a strategy that defines the action it will take given an observation to maximize cumulative rewards. The policy can be interpreted as a function \(\pi: O \times A \to [0, 1] \), associating every action in an observation with the probability of taking that action, or as a function \(\pi: \mathcal{O} \to \mathcal{A}\), associating every observation to the single most valuable action.
Most DRL algorithms aim at maximizing either the action-value function \( Q^\pi(o, a) \):
\begin{equation}\label{action_value_function}
Q^\pi(o, a) = \mathbb{E} \left[ \sum_{t=0}^{\infty} \gamma^t R(o_t, a_t) \mid o_0 = o, a_0 = a, \pi \right]
\end{equation}

or the value function:

\begin{equation}\label{value_function}
V^\pi(o) = \mathbb{E} \left[ \sum_{t=0}^{\infty} \gamma^t R(o_t, a_t) \mid o_0 = o, \pi \right]
\end{equation}

where \(\mathbb{E}\) represents the expected value, while \(R\) and \(\gamma\) are, respectively, the reward function and the discount factor introduced in~\cref{def:mdp}.

PPO aims at maximizing a weighted combination of the two functions, thus adopting two networks: the policy network approximates \(Q^\pi(o, a)\), while the value network approximates \(V^\pi(o)\).
In~\cref{fig:actor_critic}, the main elements of the two NNs employed in this work are reported.

\begin{figure}[htbp]
    \centering
    \centerline{\includegraphics[width=\textwidth]{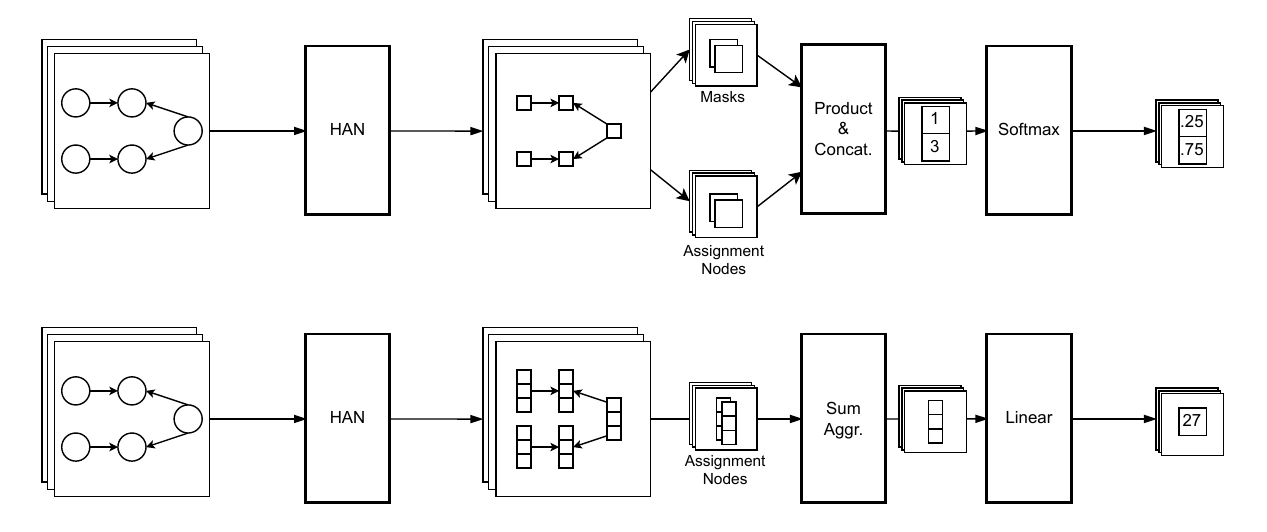}}
    \caption{A visual representation of the policy network (top) and the value network (bottom).}
    \label{fig:actor_critic}
\end{figure}

The two networks are basic encoder-decoder architectures, where the HAN layer acts as an encoder in both cases.
In the policy network, the HAN layer outputs a single embedded value for each node. The embedded assignment nodes are then directly summed to a mask vector containing \(0\) for selectable nodes and \(-\infty\) for non-selectable ones. The results are then concatenated and passed to a logarithmic softmax layer to extract the probability of selecting each assignment node.
In the value network, the HAN layer embeds each node as a vector (in the experiments, node embeddings have \(16\) elements). The embeddings of assignment nodes are then summed point-wise, and the resulting vector is used as input to a linear layer that outputs a single value.

\subsection{Rewards}\label{subsec:rewards}

Crafting reward functions that take into account the time dimension is a notoriously difficult task~\cite{pardo2022timelimitsreinforcementlearning}. In our case, since the objective is to minimize the total cycle time of cases, a logical reward function would return the (negative) cycle time every time a case is completed. However, in early experiments, it was found that this reward function is ineffective in the DTAP considered in this work. This is in line with our understanding of ineffective reward functions, since rewards produced with this method: 1) are sparse since most assignments do not produce rewards; and 2) are delayed since the effect of an assignment is only seen a long time after the action is taken. These properties tend to prevent the learning of effective policies based on the reward function. 

To overcome this issue, we propose an alternative reward function and demonstrate that the sum of rewards obtained at the end of an episode using the alternative method equals the sum of cycle times. In other words, we show that the alternative reward function is directly related to the objective of minimizing the cycle time of cases. Moreover, we will provide empirical evidence that the proposed reward function enables us to learn effective policies for our problem. 

The proposed reward function tracks the (negative) number of cases in the system between transitions, multiplied by the simulation time between transitions. The sum of these values is returned as a reward every time an action (assignment) is performed. The queue of active and busy cases at the end of an episode is truncated so that the residual cycle time cumulated at the end of an episode for cases that are not completed is summed to the total cycle time.
It is easy to see that this equals the sum of cycle times by observing that the sum of rewards over an episode equals the area under the curve of the function of the number of cases in the system over time. 

\begin{figure}[htbp]
    \centering
    \centerline{\includegraphics[width=\textwidth]{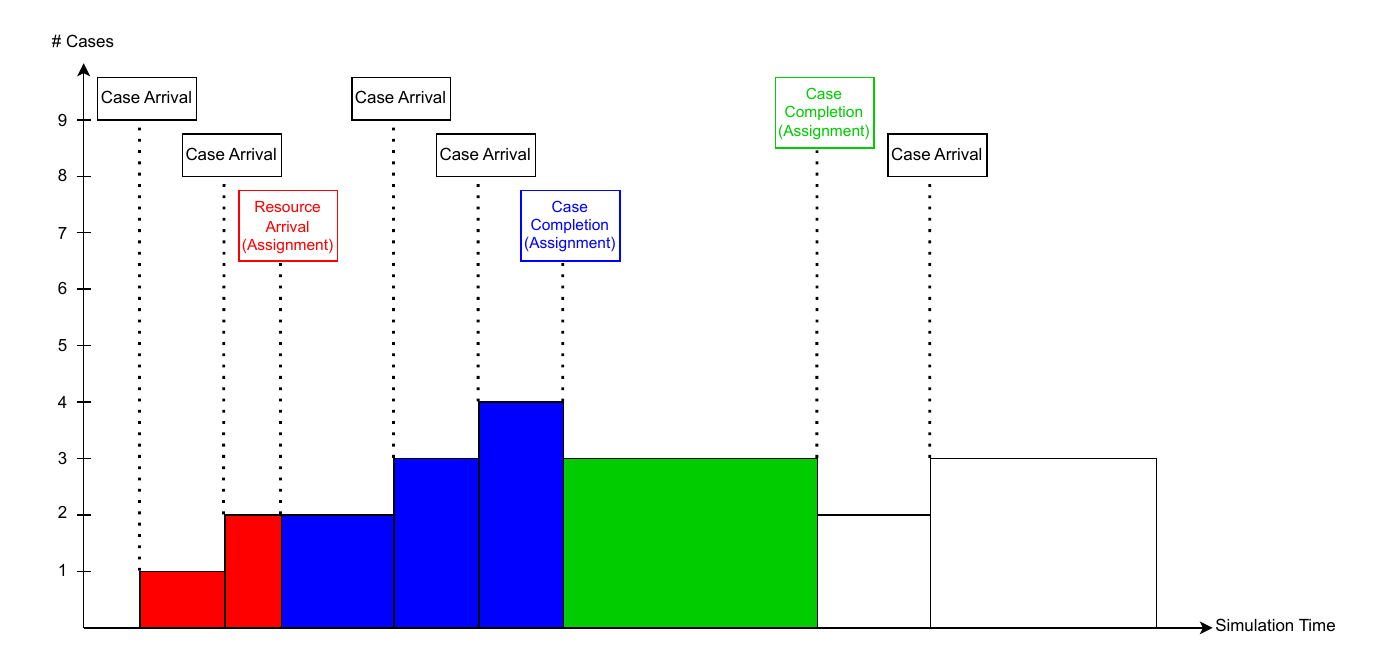}}
    \caption{A visual representation of the reward function.}
    \label{fig:reward_function}
\end{figure}

In Fig.~\ref{fig:reward_function}, we report the occurrence of three assignment actions (respectively, in red, blue, and green) and the area under the curve (in the same color) that represents their reward. The horizontal axis shows the simulation time, and the vertical axis shows the number of active cases in the system. Two cases arrive in the system and consequently raise the number of active cases until a resource arrives, making it possible to execute the first assignment action. This action receives a reward that is represented by the red area under the curve. Subsequently, two more cases arrive until a case is completed, freeing a resource and thus enabling a new assignment action, producing a reward equal to the blue area. Lastly, a new case completion allows for a new assignment whose reward is represented by the green area.

We now formally define the reward function and prove that the sum of these rewards in an episode is equal to the negative sum of the cycle times. Consequently, the reward function incentivizes the agent to minimize the average number of cases in the system, resulting in the minimization of the cycle time of cases.

\begin{definition}[Reward Function that Minimizes Sum of Cycle Times]
Let $t\in \mathbb{N}$ be the sequence number of a decision step. Furthermore, let $\mathrm{T}_t = \{ \tau_0, \cdots, \tau_n \}$ be the multiset of all times a state transition happened between decision step $t-1$ and $t$ or since the start of the simulation if $t=0$ and let $C_t = \{ 
 |C^{a}_{\tau_0}| + |C^{b}_{\tau_0}|, \cdots, |C^{a}_{\tau_n}| + |C^{b}_{\tau_n}| \} $ be the multiset of the number of active and busy cases in those states.

We define the reward of decision $t$ as:

\begin{equation}\label{eq:reward_func}
r_t = -\sum_{i=0}^{| \mathrm{T} _t \setminus \{ \tau_n \} |}{c_{i}(\tau_{i+1} - \tau_{i})}
\end{equation}    
\end{definition}

\begin{theorem}\label{proof:reward}
Consider a DTAP with a limited time horizon \(\tau_{\text{max}}\) and constant arrival rate of cases \(\lambda\). Let \(t_{\text{max}}\) be the sequence number of the last decision taken before $\tau_{\text{max}}$.

Assuming the reward function proposed in~\cref{eq:reward_func} and a discount factor \( \gamma = 1 \), the sum of rewards over an episode \(R_{t_{\text{max}}}\) is equal to the (negative) product between the total number of cases that entered the system, \(C_{\text{max}} = |C^{a}_{\tau_{\text{max}}}| + |C^{b}_{\tau_{\text{max}}}| + |C^{c}_{\tau_{\text{max}}}|\), and the total simulation time and the average cycle time of cases \(W\):

\begin{equation}\label{eq:sum_rewards}
R_{t_{\text{max}}} = \sum_{t=0}^{t_{\text{max}}} r_t = - C_{\text{max}}W
\end{equation}

\end{theorem}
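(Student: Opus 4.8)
The plan is to show that $-R_{t_{\text{max}}}$ equals the area under the step function counting the cases present in the system over $[0,\tau_{\text{max}}]$, and then to re-express that area as a sum of per-case cycle times. Since the theorem fixes $\gamma = 1$, we have $R_{t_{\text{max}}} = \sum_{t=0}^{t_{\text{max}}} r_t$, so no discounting enters the argument. Let $N(\tau) = |C^{a}_{\tau}| + |C^{b}_{\tau}|$ denote the number of active-or-busy cases present at simulation time $\tau$. By the transitions in~\cref{def:events}, $N$ changes only at transition times and is constant between consecutive transitions, so it is a piecewise-constant step function on $[0,\tau_{\text{max}}]$, and the claimed identity $R_{t_{\text{max}}} = -C_{\text{max}}W$ amounts to equating its integral with the total cycle time.

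First I would argue that each reward $r_t = -\sum_i c_i(\tau_{i+1}-\tau_i)$ is exactly the negative area under $N$ over the portion of the time axis covered by decision step $t$. This holds because, by construction, $\mathrm{T}_t = \{\tau_0,\dots,\tau_n\}$ lists every transition time between decision steps $t-1$ and $t$, and $c_i$ records the (constant) value of $N$ on the sub-interval $[\tau_i,\tau_{i+1})$; hence $\sum_i c_i(\tau_{i+1}-\tau_i)$ is the Riemann integral of $N$ over that span. The \textit{Plan Activity} and \textit{Start Activity} transitions do not advance simulation time, so the sub-intervals they induce have zero length and contribute nothing, keeping the accounting consistent.

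Next I would telescope across decision steps: since the last transition time recorded for step $t$ coincides with the first transition time of step $t+1$, the sub-intervals over $t = 0,\dots,t_{\text{max}}$ tile $[0,\tau_{\text{max}}]$ with no gaps or overlaps, giving $R_{t_{\text{max}}} = -\int_{0}^{\tau_{\text{max}}} N(\tau)\,d\tau$. The conceptually central step is then a change of summation order: writing $N(\tau) = \sum_{c} \mathbf{1}[\,c \text{ present at } \tau\,]$ as a sum over all cases that entered the system, a finite rearrangement (Tonelli for a step function) yields
\[
\int_{0}^{\tau_{\text{max}}} N(\tau)\,d\tau = \sum_{c} \int_{0}^{\tau_{\text{max}}} \mathbf{1}[\,c \text{ present at } \tau\,]\,d\tau = \sum_{c} w_c,
\]
where $w_c$ is the cycle time of case $c$, truncated at $\tau_{\text{max}}$ for cases still in the system. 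By the definition of $W$ as the average cycle time over the $C_{\text{max}}$ cases that entered, $\sum_c w_c = C_{\text{max}}W$, and chaining the equalities gives $R_{t_{\text{max}}} = -C_{\text{max}}W$.

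I expect the main obstacle to be the bookkeeping at the two boundaries rather than the integral identity itself. On one side, one must verify that the zero-duration \textit{Plan}/\textit{Start Activity} transitions and the indexing of $\mathrm{T}_t$ and $C_t$ really tile $[0,\tau_{\text{max}}]$ exactly, with the endpoints of consecutive decision steps matching so the telescoping leaves no off-by-one gap. On the other side, the truncation at $\tau_{\text{max}}$ must be handled so that an unfinished case contributes precisely its residual time-in-system to both $\sum_c w_c$ and to $W$; this is what makes the inclusion of the completed, busy, and active remainders in $C_{\text{max}} = |C^{a}_{\tau_{\text{max}}}| + |C^{b}_{\tau_{\text{max}}}| + |C^{c}_{\tau_{\text{max}}}|$ consistent with the averaged quantity $W$.
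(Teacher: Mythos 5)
Your proposal is correct and follows essentially the same route as the paper's proof: both equate $-R_{t_{\text{max}}}$ with the area under the number-in-system curve over $[0,\tau_{\text{max}}]$ and then identify that area with the total (truncated) cycle time $C_{\text{max}}W$. The only difference is one of rigor — the per-case indicator decomposition $N(\tau)=\sum_c \mathbf{1}[\,c \text{ present at } \tau\,]$ and the tiling/truncation bookkeeping you spell out are exactly the steps the paper asserts in a single line (``the reward accumulates the total time passed in the system by all cases''), so your version is a more careful rendering of the same argument rather than a different one.
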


\begin{proof}

The sum of rewards over an episode can be written as:

\begin{equation}
R_{t_{\text{max}}} = \sum_{t=0}^{t_{\text{max}}} r_t = \sum_{t=0}^{t_{\text{max}}} \left( -\sum_{i=0}^{| \mathrm{T} t \setminus { \tau_n } |}{c{i}(\tau_{i+1} - \tau_{i})} \right)
\end{equation}

We can interchange the order of summation:

\begin{equation}
R_{t_{\text{max}}} = -\sum_{t=0}^{t_{\text{max}}} \sum_{i=0}^{| \mathrm{T} t \setminus { \tau_n } |}{c{i}(\tau_{i+1} - \tau_{i})}
\end{equation}

Since \(\tau_{i+1} - \tau_{i}\) represents the time interval between consecutive decision steps, the reward accumulates the total time passed in the system by all cases.
The average cycle time \(W\) is calculated as the fraction between the total time spent by all cases and the total number of cases in a trace \(C_{\text{max}}\). Thus, the average cycle time (W) can be expressed as:
\begin{equation}
W = \frac{\sum_{t=0}^{t_{\text{max}}} \sum_{i=0}^{| \mathrm{T} t \setminus { \tau_n } |}{c{i}(\tau_{i+1} - \tau_{i})}}{C_{\text{max}}}
\end{equation}

Multiplying both sides by \(C_{\text{max}}\):

\begin{equation}
C_{\text{max}} W = \sum_{t=0}^{t_{\text{max}}} \sum_{i=0}^{| \mathrm{T} t \setminus { \tau_n } |}{c{i}(\tau_{i+1} - \tau_{i})}
\end{equation}

Therefore, the sum of rewards over the episode is:

\begin{equation}
R_{t_{\text{max}}} = - C_{\text{max}} W
\end{equation}

\end{proof}

\section{Experimental Evaluation}\label{sec:experimental_setting}

This section is dedicated to an empirical demonstration of the main claim of this work, namely, that the method proposed in~\cref{sec:method} is suitable to solve DTAPs at scale.
In particular, we dedicate~\cref{subsec:parameters} to describing how the real-world scale DTAPs used for evaluation are parameterized, and~\cref{subsec:results} to examine the results of the trained agents in various settings.

\subsection{DTAP Parametrization}\label{subsec:parameters}

In this work, we employ process mining techniques to parameterize five large-scale DTAP problem instances from real-world event logs. Process mining involves extracting knowledge from event logs to discover, monitor, and improve real-world processes using data from the information systems that support them~\cite{b9ce1ceaafbe4a20bc39b8b6026ea9be}. 

An event log is a record of events that occur within an information system. The event logs considered in process mining typically include information such as the case ID (which identifies the case), the activity name, and the timestamp of when the activity occurred. In addition to these, the event logs used in this work also report which resource was assigned to activities that require resources to be performed. In~\cref{fig:event_log}, we propose an example of an event log header produced by the business process reported in~\cref{fig:bpmn}.

\begin{figure}[H]
    \centering
    \centerline{\includegraphics[width=0.8\textwidth]{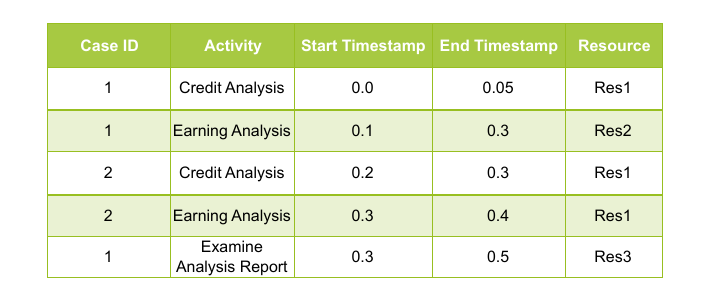}}
    \caption{An example header of an event log.}
    \label{fig:event_log}
\end{figure}

With reference to~\cref{def:constants}, the elements of \(L\) are the unique values of the \textit{Activity} column, while the elements of \(M\) are the unique values of the \textit{Resource} column. The function \(f_{comp}\) is parameterized based on the timestamps in the \textit{Timestamp} column.
Only a subset of resources can be assigned to each activity label, according to the resource pools in \(G\). Each resource in the resource pool of an activity label has different proficiency in completing that activity, expressed in \(f_{\text{comp}}\) as a Gaussian random variable whose mean and variance are estimated from the values in the \textit{Start Timestamp} and the \textit{End Timestamp} columns. A resource is included in the resource pool of an activity if the resource has been assigned to the activity at least $2$ times in the log (this way, it is always possible to calculate the completion time variance). Moreover, resources follow a probabilistic weekly working schedule that is also mined from the logs. The number of times a resource has been assigned to an activity in each hour of the week is used to parameterize the elements of \(V\) so that resources that appeared more often in the dataset during a given hour of the week are more likely to be included in \(M_{\tau}^{a}\) during that hour. The process mining technique employed in this work is similar to that described in~\cite{Camargo2020AutomatedLogs}. We extend the method to include probabilistic resource calendars as in~\cite{10271965}. Although more complex methods have been proposed in recent years~\cite{DBLP:journals/corr/abs-2103-11944}, their performance is only slightly better than those obtained through~\cite{Camargo2020AutomatedLogs} and only in a limited set of metrics, at the expense of significantly more complex implementations. Given that the main objective of this work is to have a realistic and challenging set of problems to solve for our agent, we favored the simplicity and transparency of~\cite{Camargo2020AutomatedLogs}. For the same reason, we do not discuss the details of the method or provide a formal comparison between different process mining approaches.

We apply the proposed process mining method to five publicly available datasets: \textbf{BPI2012}~\cite{bpi2012}, the log of interactions for loan applications at a Dutch financial institution; \textbf{BPI2017}~\cite{bpi2017}, a more recent version of the process in BPI2012, with different process structure, activity labels, and resources; \textbf{MICRO}, a synthetic log made available by Microsoft for training purposes\footnote{Download available at https://learn.microsoft.com/en-us/power-automate/process-mining-tutorial}; \textbf{PROD}~\cite{10.1007/978-3-031-70396-6_10}, a log of a heavy machinery production process; \textbf{CONS}~\cite{10.1007/978-3-031-70396-6_10}, a log of interactions between a university secretary's office and students.

In~\cref{table:data_mining}, we report the main characteristics of the five DTAP instances with reference to the elements of~\cref{def:constants}. The column \textbf{\(|L|\)} reports the number of unique activity labels, \textbf{\(|M|\)} the number of unique resources, \textbf{\(\lambda\)} the arrival rate of cases (in cases per hour), \textbf{\(\bar{|C|}\)} the mean and variance of the number of activities per case, \textbf{\(\bar{|G|}\)} the mean and variance of the number of resources per resource pool, and \textbf{\(\bar{P}\)} the mean and variance of the number of available resources per hour of the week (i.e., the elements of \(P\)).

\begin{table}[ht]
\centering
\begin{tabular}{>{\centering\arraybackslash}p{2cm} >{\centering\arraybackslash}p{0.5cm} >{\centering\arraybackslash}p{0.8cm} >{\centering\arraybackslash}p{0.6cm} >{\centering\arraybackslash}p{2.2cm} >{\centering\arraybackslash}p{2.2cm} >{\centering\arraybackslash}p{2.2cm}}
\toprule
 \textbf{Dataset} & \textbf{\(|L|\)} & \textbf{\(|M|\)} & \textbf{\(\lambda\)} & \textbf{\(\bar{|C|}\)} & \textbf{\(\bar{|G|}\)} & \textbf{\(\bar{P}\)} \\
\midrule
BPI2012 & 6 & 55 & 4.2 & \(2.3 \pm 1.2\) & \(33.2 \pm 15.2\) & \(1.8 \pm 1.2\) \\
BPI2017 & 7 & 145 & 4.8 & \(7.8 \pm 5.1\) & \(81.8 \pm 46.0\) & \(2.8 \pm 3.7\) \\
MICRO & 13 & 8 & 0.1 & \(7.5 \pm 3.6\) & \(4.2 \pm 1.0\) & \(3.8 \pm 3.1\) \\
CONS & 16 & 559 & 0.2 & \(5.2 \pm 2.2\) & \(27.2 \pm 32.0\) & \(9.3 \pm 4.0\) \\
PROD & 13 & 42 & 0.1 & \(19.6 \pm 19.4\) & \(5.8 \pm 3.1\) & \(12.9 \pm 7.3\) \\
\bottomrule
\end{tabular}
\caption{Overview of BPI2012, BPI2017, MICRO, PROD, and CONS properties.}
\label{table:data_mining}
\end{table}

The values of \(\lambda\) and \(G\) reported in~\cref{table:data_mining} are adjusted to ensure the resulting systems are stable under the Shortest Processing Time (SPT) policy (detailed in the next section). In~\cref{fig:regime}, we report the average number of cases in the system under the SPT policy over a 28-day horizon.

\begin{figure}[ht]
    \centering
    \centerline{\includegraphics[width=\textwidth]{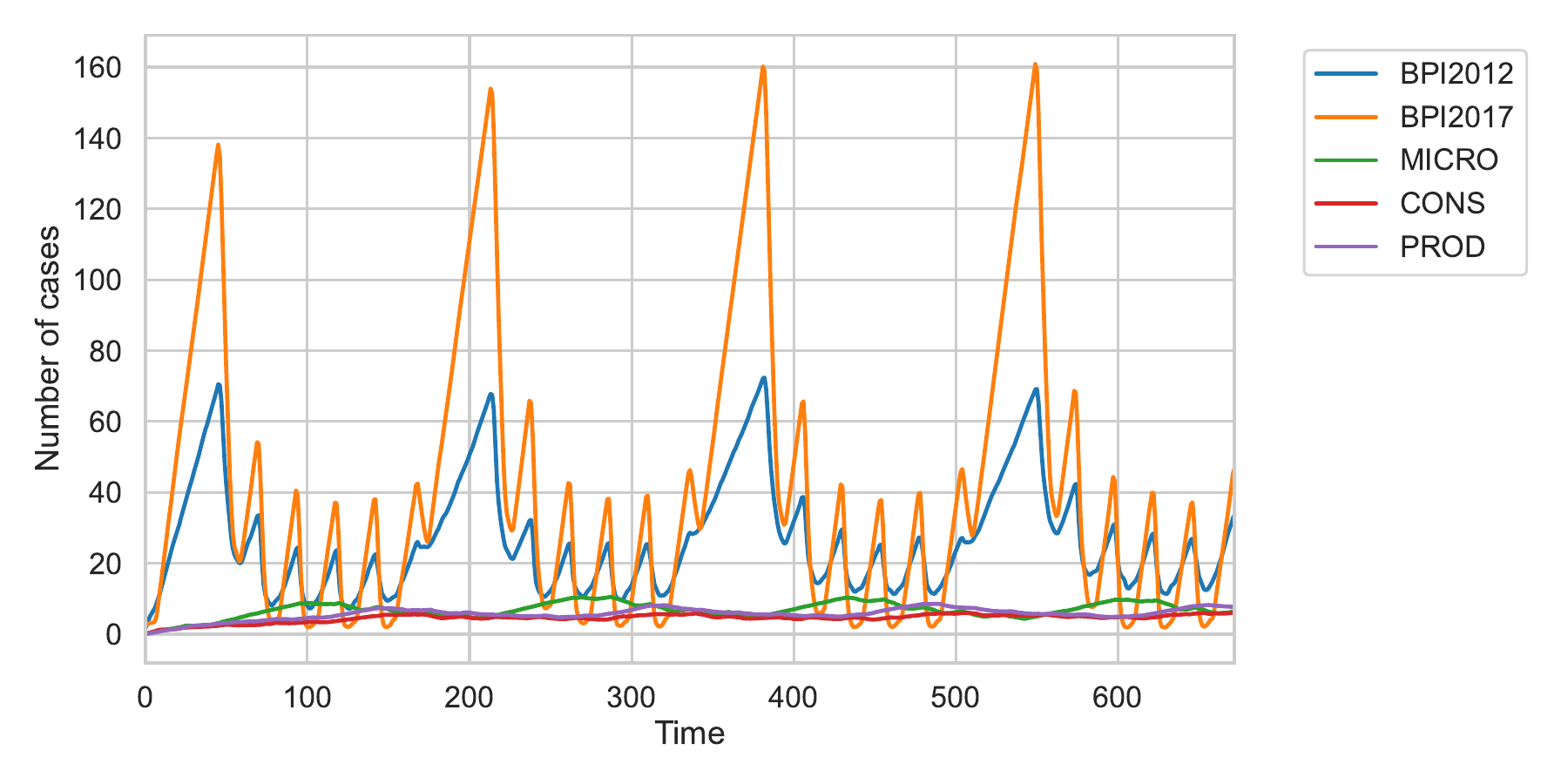}}
    \caption{The number of cases in the system when following the SPT policy over 4 weeks.}
    \label{fig:regime}
\end{figure}

\subsection{Results}\label{subsec:results}

The simulations described in \cref{sec:experimental_setting} are used to train the DRL agent on 7-day traces. The same simulations are then used to produce a set of 1000 independent 7-day traces with different policies: the random policy (Random), which selects assignments randomly; the First-In-First-Out (FIFO) policy, which prioritizes cases based on their entry time; the Shortest Processing Time (SPT) policy, which assigns activities to the resource that completes them the fastest on average: the DRL approach (PPO) proposed in this work. In~\cref{tab:results_7d}, we report the means and standard deviations of the average cycle times for every policy.

\begin{table}[ht]
\centering
\resizebox{\textwidth}{!}{%
\begin{tabular}{>{\centering\arraybackslash}m{2.1cm} >{\centering\arraybackslash}m{2.1cm} >{\centering\arraybackslash}m{2.1cm} >{\centering\arraybackslash}m{2.3cm} >{\centering\arraybackslash}m{2.3cm}}
\toprule
\multirow{2}{*}{\textbf{Dataset}} & \multicolumn{4}{c}{\textbf{Method}} \\
\cmidrule(lr){2-5}
 & \textbf{Random} & \textbf{FIFO} & \textbf{SPT} & \textbf{PPO} \\
\midrule
BPI2012 & 17.4 \(\pm\) 11.0 & 14.8 \(\pm\) 9.4 & 5.8 \(\pm\) 2.9 & \textbf{5.4 \(\pm\) 2.6} \\
BPI2017 & 9.7 \(\pm\) 2.7 & 9.8 \(\pm\) 2.5 & 7.0 \(\pm\) 2.1 & \textbf{6.3 \(\pm\) 1.3} \\
MICRO & 47.4 \(\pm\) 10.5 & 48.6 \(\pm\) 12.4 & \textbf{38.4 \(\pm\) 12.0} & \textbf{37.9 \(\pm\) 11.5} \\
CONS & 31.6 \(\pm\) 7.8 & 31.9 \(\pm\) 9.0 & 24.5 \(\pm\) 8.1 & \textbf{23.5 \(\pm\) 7.9} \\
PROD & 52.9 \(\pm\) 10.4 & 52.4 \(\pm\) 10.8 & \textbf{42.9 \(\pm\) 10.6} & \textbf{42.6 \(\pm\) 10.6} \\
\bottomrule
\end{tabular}
}
\caption{Performance comparison on 7-day traces (best result in bold text).}
\label{tab:results_7d}
\end{table}

The results on the $7$ day horizon instances clearly show that PPO matches or exceeds the performance of SPT in all instances. In particular, the differences between the average cycle times obtained by PPO and SPT in BPI2012, BPI2017, and CONS are statistically significant according to a t-test with a significance level of \(10^{-2}\). For BPI2012, BPI2017, and CONS, the reduction in average cycle time between SPT and PPO is, respectively, approximately \(6\%\), \(11\%\), and \(4\%\). In the case of MICRO and PROD, the two policies do not present statistically significant differences in terms of average cycle time.

In~\cref{fig:ppo_vs_spt}, we report the percentage of observations in which PPO and SPT take the same actions (in blue) and in which they take different actions (in orange).

\begin{figure}[ht]
    \centering
    \centerline{\includegraphics[width=0.6\textwidth]{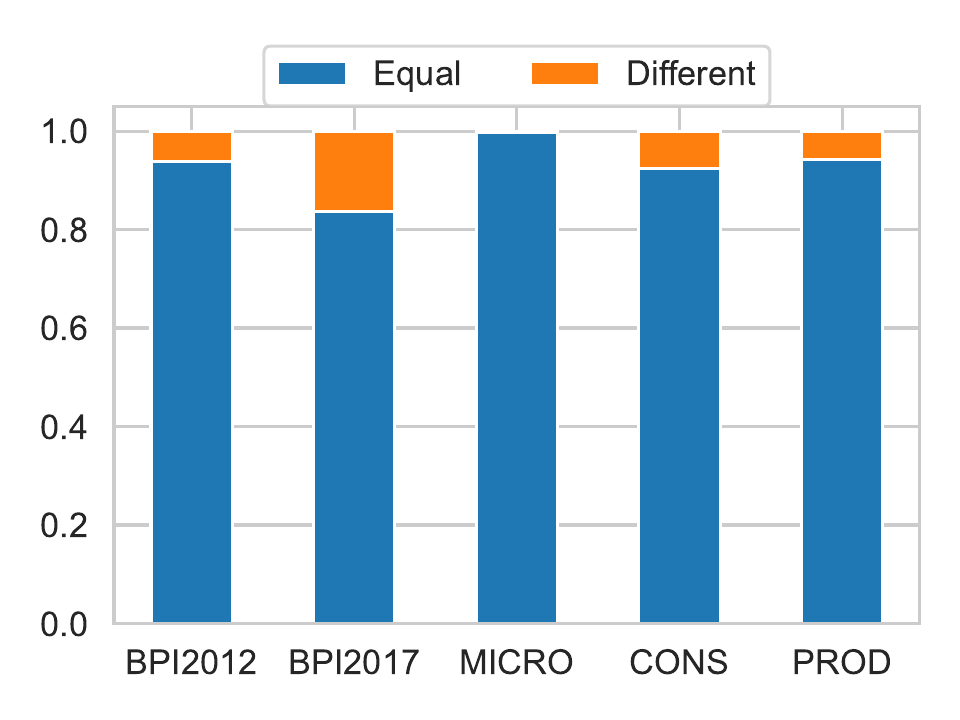}}
    \caption{Percentage of equal (blue) and different (orange) actions between PPO and SPT on a set of \(10^3\) observations (with at least two alternatives).}
    \label{fig:ppo_vs_spt}
\end{figure}

It can be observed that the DRL agent behaves differently depending on the problem. In the two extremes, in MICRO PPO mimics SPT in all the observed states, whereas in BPI2017 it behaves differently in around $18\%$ of the cases. This observation aligns with the results in~\cref{tab:results_7d}, where PPO notably outperforms SPT in BPI2017. 
This is similar to what we observe in BPI2012 and CONS, although on a smaller scale.
In the case of PROD, however, we observe that PPO behaves differently from SPT in around $6\%$ of the cases while obtaining the same results. This suggests that for the PROD simulation, different policies can be followed to obtain similar, near-optimal results. This can happen in the case of different assignments with similar average completion times.

In~\cref{table:results_28d}, we provide the average and standard deviations of mean cycle times over $10^3$ replicates of $28$ day traces. Notice that, for PPO, the models that were trained on a $7$ days horizon are used for inference.

\begin{table}[ht]
\centering
\begin{tabular}{>{\centering\arraybackslash}m{2.1cm} >{\centering\arraybackslash}m{2.1cm} >{\centering\arraybackslash}m{2.1cm} >{\centering\arraybackslash}m{2.3cm} >{\centering\arraybackslash}m{2.3cm}}
\toprule
\multirow{2}{*}{\textbf{Dataset}} & \multicolumn{4}{c}{\textbf{Method}} \\
\cmidrule(lr){2-5}
 & \textbf{Random} & \textbf{FIFO} & \textbf{SPT} & \textbf{PPO} \\
\midrule
BPI2012 & 48.6 \( \pm \) 26.1 & 29.7 \( \pm \) 16.2 & \textbf{6.4 \( \pm \) 1.9} & \textbf{6.3 \( \pm \) 1.9} \\
BPI2017 & 14.2 \( \pm \) 3.5 & 13.4 \( \pm \) 2.5 & 8.6 \( \pm \) 1.3 & \textbf{6.9 \( \pm \) 0.9} \\
MICRO & 61.2 \( \pm \) 11.6 & 66.1 \( \pm \) 15.7 & \textbf{51.0 \( \pm \) 12.5} & \textbf{49.9 \( \pm \) 10.9} \\
CONS & 42.4 \( \pm \) 7.6 & 40.9 \( \pm \) 7.9 & \textbf{33.2 \( \pm \) 7.5} & \textbf{33.0 \( \pm \) 7.7} \\
PROD & 79.5 \( \pm \) 10.9 & 79.1 \( \pm \) 11.2 & \textbf{58.6 \( \pm \) 9.1} & \textbf{58.6 \( \pm \) 9.8} \\
\bottomrule
\end{tabular}
\caption{Performance comparison on 28-day traces (best result in bold text).}
\label{table:results_28d}
\end{table}

Since the system is initially empty, the average cycle times registered on the 28-day horizon are higher than those recorded on the 7-day horizon in all cases. Interestingly, the performance gap between SPT and PPO is more pronounced in the case of BPI2017 (around $23\%$), while in the case of BPI2012 and CONS, the two policies do not show statistically significant differences according to a t-test with significance level $10^-2$. This may be explained by taking into account the percentage of actions the two policies take differently, shown in~\cref{fig:ppo_vs_spt}: in the case of BPI2017, the PPO policy behaves substantially differently from SPT, while in the case of BPI2012 and CONS this difference is less pronounced. It is possible that, for these last two datasets, the different actions have an impact only when the system is still empty, whereas their importance is negligible when the system is full. However, PPO always performs at least on par with SPT when applied to 28-day traces in all of the proposed simulations.

A remarkable property of GNNs is their ability to generalize across different domains. In the context of this work, this means that a policy learned by PPO on a given simulation can be applied seamlessly to any of the other simulations. 
Given the previous results, it is particularly interesting to observe the performance obtained testing PPO with the weights learned on BPI2017 on the other four datasets. 
In~\cref{table:results_generalized}, we show how the algorithm trained on each simulation performs on the DTAP instance used to train it (value on the diagonal) and on the other instances.

\begin{table}[ht]
\centering
\begin{tabular}{>{\centering\arraybackslash}m{1.8cm} >{\centering\arraybackslash}m{1.8cm} >{\centering\arraybackslash}m{1.8cm} >{\centering\arraybackslash}m{1.8cm} >{\centering\arraybackslash}m{1.8cm} >{\centering\arraybackslash}m{1.8cm}}
\toprule
\multirow{2}{*}{\makecell{\textbf{Test}\\\textbf{Dataset}}} & \multicolumn{5}{c}{\textbf{Train Dataset}} \\
\cmidrule(lr){2-6}
 & \textbf{BPI2012} & \textbf{BPI2017} & \textbf{MICRO} & \textbf{CONS} & \textbf{PROD} \\
\midrule
BPI2012 & \(\mathbf{5.4 \pm 2.6}\) & \(\mathbf{5.4 \pm 2.6}\) & \(\mathbf{5.5 \pm 2.6}\) & \( 10.8 \pm 7.5\) &  \(9.6 \pm 6.2\)\\
BPI2017 & \(6.9 \pm 1.9\) & \(\mathbf{6.3 \pm 1.3}\) & \( 6.9 \pm 2.0\) & \( 7.7 \pm 2.0 \)  & \(7.4 \pm 2.3\)  \\
MICRO & \(37.6 \pm 11.1\) & \(38.1 \pm 12.0\) & \(37.9 \pm 11.5\) & \( 38.2 \pm 11.9\) & \(38.7 \pm 11.7\) \\
CONS & \(23.6 \pm 8.1\) & \(23.4 \pm 8.4\) & \( 24.2 \pm 8.4\) & \( 23.5 \pm 7.9\) & \(24.0 \pm 8.2\) \\
PROD & \(42.5 \pm 10.0\) & \(42.3 \pm 10.1\) & \(42.5 \pm 10.5\) & \( 42.1 \pm 9.9\) & \(42.6 \pm 10.6\) \\
\bottomrule
\end{tabular}
\caption{Performance of the DRL model trained on each problem applied to the others on $7$ day traces (best result in bold text, omitted if the differences are not statistically significant).}
\label{table:results_generalized}
\end{table}

As emerges from~\cref{table:results_generalized}, the agent trained in BPI2017 produces results on par with the agent trained in the test environment in every instance, while none of the other agents could match the original's performance when tested on BPI2017. Interestingly, the agents trained on CONS and PROD produced suboptimal results when tested on BPI2012 and BPI2017, while the opposite is not true. There is no obvious explanation for this behavior, and a future research effort shall be devoted to exploring the possibilities and limitations of the proposed model in terms of generalization.
\section{Conclusions}\label{sec:final}
This work proposes a modular DSS for a DTAP variant where a final decision or product is obtained by completing a sequence of activities. We provide a formal description of the environment and embed it in an MDP formulation that allows a DRL agent to learn an effective assignment policy. We propose a graph-based observation for the agent, along with suitable GNNs for the policy and value network. We also introduce a specialized reward function to train the agent, formally proving the equivalence between maximizing this reward and minimizing the average cycle time of cases. The evaluation is performed on five DTAP instances, whose parameters are extracted from real-world event logs. The DRL agent learns policies that perform better than or equal to the best baseline in every environment. Moreover, the DRL agent shows generalization capabilities in episodes longer than those used for training and across different DTAP instances.\footnote{Code and data are available at https://github.com/lobiaminor/bpo\_challenge\_gdrl}

This work demonstrates the fitness of DRL as a general solution method in a DSS for DTAPs. By formulating complete and separate descriptions for the two DSS modules, environment and agent, we provide a useful tool to support human decision-makers. In fact, an agent trained in a simulated environment can either be used directly to decide on the assignments to make in the real world or to give online recommendations to human decision-makers. Moreover, the environment can be mined directly from historical data, but it can also be used to study scenarios previously unseen. For example, the arrival rate of cases can be increased to study performances under heavy loads, or new activities can be introduced to study the effect of changes in the business processes underlying the DTAP.

Although the methods proposed in this work lift some of the restrictions of previous solutions, they still overlook some possible real-world challenges. Most notably, the DTAP defined in this work assumes that the environment's dynamics remain stationary during execution, which is often not the case in the real world. Additionally, the algorithm presented in this work does not account for the possibility of postponing a decision to the future, a capability that can lead to more effective assignment policies. Lastly, the cases in this work presented only sequential activities while, often, activities in a case can be carried out in parallel.
These limitations require further refinements in the environment and the agent, which will be introduced in future works.

\bibliographystyle{elsarticle-num} 
\bibliography{references, datasets, new_dss, new_review_3}

\end{document}